\newtheorem{assumption}{Assumption}
\newtheorem{definition}{Definition}
\newtheorem{theorem}{Theorem}
\newcommand{\hh}{h}
\newcommand{\Xone}{X_1} 
\newcommand{\Xtwo}{X_2}
\newcommand{\cXone}{{\cX_1}}
\newcommand{\cXtwo}{{\cX_2}}
\newcommand{\bongram}{bag-of-$n$-gram}
\newcommand{\cX}{{\mathcal X}}
\newcommand{\cY}{{\mathcal Y}}
\newcommand{\cH}{{\mathcal H}}
\newcommand{\bw}{{\mathbf w}}
\newcommand{\bx}{{\mathbf x}}
\newcommand{\bz}{{\mathbf z}}
\newcommand{\bp}{{\mathbf p}}
\newcommand{\bv}{{\mathbf v}}
\newcommand{\psz}{p}
\newcommand{\voc}{V}
\newcommand{\vsz}{|\voc|}
\newcommand{\wrd}{w}
\newcommand{\cnn}{CNN}
\newcommand{\cnns}{CNNs}
\newcommand{\Activ}{\boldsymbol{\sigma}}
\newcommand{\activ}{\sigma}
\newcommand{\wei}{\bw}
\newcommand{\Wei}{{\mathbf W}}
\newcommand{\Bias}{{\mathbf b}}
\newcommand{\Vei}{{\mathbf V}}
\newcommand{\region}{{\mathbf r}} % a small region 
\newcommand{\iL}{\ell} % location index
\newcommand{\myre}{\mathbb{R}}
\newcommand{\rdim}{q} % region dimensionality 
\newcommand{\nodes}{m} % number of weight vectors 
\newcommand{\scnn}{seq-CNN}
\newcommand{\bcnn}{bow-CNN}
\newcommand{\nbw}{NB-LM}
\newcommand{\ourEmb} {tv-embedding}
\newcommand{\ourEmbs} {tv-embeddings}
\newcommand{\OurEmb} {Tv-embedding}
\newcommand{\ourEmbd}{tv-embedded}
\newcommand{\ourEmbShort} {tv-embed.}
\newcommand{\ourEmbUns}  {unsup-tv.}
\newcommand{\ourEmbPar}  {parsup-tv.}
\newcommand{\ourEmbUnsN} {unsup3-tv.}
\newcommand{\bow}{{bow}}
\title{Semi-supervised Convolutional Neural Networks for Text
  Categorization via Region Embedding}
\author{
Rie Johnson \\
RJ Research Consulting \\
Tarrytown, NY, USA \\
\texttt{riejohnson@gmail.com} \\
\And
Tong Zhang\thanks{
  Tong Zhang would like to acknowledge NSF IIS-1250985, NSF IIS-1407939, 
  and NIH R01AI116744 for supporting his research.
}
 \\
Baidu Inc., Beijing, China \\
Rutgers University, Piscataway, NJ, USA \\
\texttt{tzhang@stat.rutgers.edu} \\
}
\newcommand{\tightDisplayBegin}[1]{\begingroup \setlength{\belowdisplayskip}{#1} \setlength{\belowdisplayshortskip}{#1} \setlength{\abovedisplayskip}{#1} \setlength{\abovedisplayshortskip}{#1}}
\newcommand{\tightDisplayEnd}{\endgroup}
\newcommand{\tightArrayBegin}[1]{\begingroup \renewcommand\arraystretch{#1}}
\newcommand{\tightArrayEnd}{\endgroup}
\newcommand\tightpara{\@startsection{paragraph}{4}{\z@}{0.5ex plus
   0ex minus 0.2ex}{-1em}{\normalsize\bf}}   
\begin{document}
\maketitle
\begin{abstract}
%cmt: abstract needs work.  too redundant. 
This paper presents a new semi-supervised framework with convolutional neural networks (CNNs) 
for text categorization. 
Unlike the previous approaches that rely on word embeddings, our method 
%directly 
learns embeddings of small text regions % using \cnn\ 
from unlabeled data 
for integration into a supervised \cnn. 
The proposed scheme for embedding learning is based on the idea of
two-view semi-supervised learning, which is
intended to be useful for the task of interest even though the training is done on unlabeled data.  
%Consequently,  % dj doesn't like it. 
Our models achieve better results than previous approaches on 
sentiment classification and topic classification tasks.
\end{abstract}

%******************************************************************************
\section{Introduction} 

Convolutional neural networks (\cnns)
\cite{LeCun+etal98}
%,C+etal11}.  
%, originally developed for image recognition, 
are neural networks that can make use of the internal structure of data 
such as the {\em 2D structure} of image data through convolution layers, 
where each computation unit responds to a small region of input data 
(e.g., a small square of a large image).  
%\footnote{
%  Depending on the context, we treat the term ``\cnn'' as an uncountable abstract noun 
%  referring to a type of algorithm rather than a countable network.  
%}
%We apply \cnn
%to text categorization to make use of the {\em 1D structure} (word order) 
%of document data so that each unit in the convolution layer responds to 
%a small region of a document (a sequence of words such as ``I love it'').
%
On text, 
\cnn\
%\footnote{
%  Depending on the context, we use the term ``\cnn'' as a proper noun  
%  referring to a name of algorithm rather than a specific network.
%}
has been gaining attention, used in systems for 
tagging, 
entity search,  %\cite{Gao+etal14}, % automatic highlighting, 
sentence modeling, %\cite{KGB14,Kim14}, 
%word embedding learning, %\cite{WCA14,Tang+etal14}, 
%product feature mining, % \cite{XLLZ14} 
%information retrieval \cite{Shen+etal14},
and so on 
\cite{nnnlpICML08,nnnlpJMLR11,XS13,Gao+etal14,Shen+etal14,KGB14,XLLZ14,Tang+etal14,WCA14,Kim14}, 
to make use of the {\em 1D structure} (word order) of text data.  
Since \cnn\ was originally developed for image data, which is fixed-sized, low-dimensional and dense,
%it cannot be applied without modifications to a text document, which is variable-sized, 
%cmt: sounds like modifying a text document, but it's cnn that needs to be modified ... 
without modification it cannot be applied to text documents, which are variable-sized, 
high-dimensional and sparse if represented by sequences of one-hot vectors.  
In many of the \cnn\ studies on text, therefore, 
words in sentences are first converted to low-dimensional {\em word vectors}. 
The word vectors are often obtained by some other method % (e.g., word2vec \cite{wvecNips13}) 
from an additional large corpus, which is 
typically done in a fashion similar to language modeling though 
there are many variations \cite{BDVJ03,nnnlpICML08,MH08,TRB10,DFU11,wvecNips13}.  
%For example, 
%\cite{DFU11} learns a word embedding so that left context and 
%right context maximally correlate in terms of canonical correlation analysis \cite{H36}; 
%word2vec \cite{wvecNips13} obtains a word embedding, essentially, 
%by training to predict sampled context words from each word.  

Use of word vectors obtained this way is a form of semi-supervised learning and 
leaves us with the following questions.  
%Q1. Is \cnn\ more effective on text in a purely supervised setting with the
%original high dimensional sparse feature vector (instead of its low
%dimensional embeddings learned from unlabeled data)?
Q1. How effective is \cnn\ on text in a purely supervised setting without the aid of unlabeled data? 
%Q2. Can we use unlabeled data more effectively with \cnn\ than
%previous approaches of unsupervised word embeddings? 
Q2. Can we use unlabeled data with \cnn\ more effectively than
using general word vector learning methods?  
%cmt: i don't want to say embedding yet.  
Our recent study \cite{JZ15} addressed Q1 on text categorization
%where it shows that it is beneficial to employ \cnn\
%with original features directly because it learns embeddings of small text regions.
%cmt: i don't wanto say embedding yet since it'll be introduced in the next para. 
%cmt: also, "original features" for text categorizations sound like bow vectors, but they're not.  
and showed that \cnn\ without a word vector layer is not only feasible 
but also beneficial when not aided by unlabeled data. 
Here we address Q2 also on text categorization: 
%and following the ideas of \cite{JZ15}, 
%we show that using \cnn\ to learn embeddings of small
%text regions in the semisupervised setting is also superior to using
%semisupervised word-level embeddings studied in previous approaches.   
building on \cite{JZ15}, we propose a new semi-supervised framework that learns embeddings of 
small text {\em regions} (instead of {\em words}) from unlabeled data, for use in a supervised \cnn.  

The essence of \cnn, as described later, is to convert small regions of data (e.g., ``love it'' in a document) 
to feature vectors for use in the upper layers; in other words, through training, a convolution 
layer learns an {\em embedding}
%\footnote{
%  We use the term `embedding' loosely to mean a structure-preserving
%  function, in particular, a function that generates low-dimensional features that preserve the predictive structure. 
%}
of small regions of data. 
Here we use the term `embedding' loosely to mean a structure-preserving
function, in particular, a function that generates low-dimensional features that preserve the predictive structure. 
\cite{JZ15} applies \cnn\ {\em directly to high-dimensional one-hot vectors}, 
which leads to {\em directly} learning an {\em embedding}
of {\em small text regions} (e.g., regions of size 3 like phrases, 
or regions of size 20 like sentences), eliminating 
the extra layer for word vector conversion. % in the previous studies.  
This direct learning of region embedding 
%(made possible partly by a new variation of convolution layer that takes a bag of words/$n$-grams as input) 
was 
%shown 
noted 
to have the merit of 
higher accuracy with a simpler system (no need to tune hyper-parameters for word vectors) 
%than word vector-based \cnn\ in the supervised setting\footnote{
%  With this type of \cnn, word vectors need to be randomly initialized and trained as part of \cnn\ training.  
%}.  
%
than supervised word vector-based \cnn\ in which word vectors are randomly initialized and trained 
as part of \cnn\ training. 
Moreover, 
the performance of \cite{JZ15}'s best \cnn\ rivaled or exceeded the previous best results
 on the benchmark datasets. %(even though some used extra resources).  

Motivated by this finding, we seek effective use of unlabeled data for text categorization 
through {\em direct learning of embeddings of text regions}. % with \cnn.  
Our new semi-supervised framework 
learns a {\em region embedding} from {\em unlabeled data} % by \cnn\ 
and uses it to produce additional input (additional to one-hot vectors) 
to supervised \cnn, where a {\em region embedding} is trained 
with {\em labeled data}.  
Specifically,  
from unlabeled data, we learn {\em \ourEmbs} (`tv' stands for
`two-view'; defined later) of a text region 
through the task of predicting its surrounding context.
According to our theoretical finding, a {\em \ourEmb} has desirable
properties under ideal conditions on the relations between two views and the labels.  
While in reality the ideal conditions may not be perfectly met, 
we consider them as guidance in designing the tasks for \ourEmb\ learning. % so that these conditions are met as much as possible.  

We consider several types of \ourEmb\ learning task trained on unlabeled data; e.g., one task is 
to predict the presence of the concepts relevant to the intended task (e.g., `desire to recommend the product') 
in the context, and we indirectly use labeled data to set up this task.  
Thus, we seek to learn \ourEmbs\ {\em useful specifically for the task of interest}.  
This is in contrast to the previous word vector/embedding learning methods, which 
%are typically motivated by language modeling and 
typically produce a word embedding for general purposes
so that all aspects (e.g., either syntactic or semantic) of words are captured. 
 In a sense, the goal of our region embedding learning is to map text regions to high-level concepts 
 relevant to the task.  This cannot be done by word embedding learning since individual words in isolation 
 are too primitive to correspond to high-level concepts. 
 %\footnote{
   For example, ``easy to use'' conveys positive sentiment, but ``use'' in isolation does not.   
% }. 
%
We show that our models with \ourEmbs\ outperform 
% previous \cnns\ 
%% with general-purpose word embeddings, 
%as well as 
the previous best results on sentiment classification 
and topic classification.  
Moreover, a more direct comparison confirms that our region \ourEmbs\ provide more 
{\em compact and effective}
representations of regions for the task of interest 
than what can be obtained by manipulation of a word embedding. 
%A potential issue of general-purpose word embedding is that, to be useful, the dimensionality may need to 
%be large since earlier dimensions may be devoted for irrelevant attributes (e.g., syntactic attributes) 
%as shown in our experiments.  
%We start with a brief review of \cite{JZ15}'s \cnn, which we call {\em one-hot \cnn}. 

%-----
\subsection{Preliminary: one-hot \cnn\ for text categorization \cite{JZ15}}
\label{sec:textcnn}

A \cnn\ is a feed-forward network equipped with convolution layers 
interleaved with pooling layers.  
% 
%As illustrated in Figure \ref{fig:textcnn}, 
A convolution layer consists of computation units, 
each of which responds to a small region of input (e.g., a small square of an image), % (e.g., words in window of size 2), 
and the small regions 
collectively cover the entire data. %document.  
A computation unit associated with the $\iL$-th region of input $\bx$ 
computes: 
\tightDisplayBegin{3pt}
\begin{align}
\Activ( \Wei \cdot \region_\iL(\bx) + \Bias )~, \label{eq:activ}
\end{align}
\tightDisplayEnd
where $\region_\iL(\bx) \in \myre^{\rdim}$ is the input {\em region vector} that represents 
the $\iL$-th region.    
Weight matrix $\Wei \in \myre^{\nodes \times \rdim}$ and 
bias vector $\Bias \in \myre^{\nodes}$ are {\em shared} by all the units in the same layer, 
and they are learned through training.  
%While previous studies often set $\region_\iL(\bx)$ to the concatenation of 
%low-dimensional word vectors for the words in the $\iL$-th region, 
%---  before this, general cnn; after this, one-hot cnn.  
In \cite{JZ15}, input $\bx$ is a document represented by one-hot vectors (Figure \ref{fig:textcnn}); 
therefore, we call \cite{JZ15}'s \cnn\ {\em one-hot \cnn};  
$\region_\iL(\bx)$ 
can be either a concatenation of one-hot vectors, a bag-of-word vector (\bow), or 
a bag-of-$n$-gram vector: 
e.g., for a region ``love it''
\newcommand{\wonv}[1]{\mbox{\small #1}}
\newcommand{\wonvb}[1]{\mbox{\small \bf #1}}
%---
%\tightDisplayBegin{2pt}
\tightArrayBegin{0.01}
\begin{align}
&\begin{array}{ccccccccccll} 
      &\wonv{I}&\wonv{it}&\wonvb{love}&&\wonv{I}&\wonvb{it}&\wonv{love}& \cr
     {\small \region_\iL(\bx)=}[ &0&0&1& | &0&1&0& ]^\top && \mbox{(concatenation)}\cr  \label{eq:seqconv}
\end{array}\\
%\end{align}
%\begin{align}
&\begin{array}{cccccclllllllll} 
  &\wonv{I}&\wonvb{it}&\wonvb{love} \cr
  {\small \region_\iL(\bx)=}[ &0&1&1&]^\top &&&&&&&&& \mbox{(\bow)}\cr  \label{eq:bowconv}
\end{array} 
\end{align}
\tightArrayEnd
%\tightDisplayEnd
%---
The \bow\ representation (\ref{eq:bowconv}) loses word order within the region but is more robust to data sparsity, 
enables a large region size such as 20, and speeds up training by having fewer parameters.  
This is what we mainly use for embedding learning from unlabeled data.  
\cnn\ with (\ref{eq:seqconv}) is called {\em \scnn} and 
\cnn\ with (\ref{eq:bowconv}) {\em \bcnn}.  
The region size and stride (distance between the region centers) are meta-parameters.
Note that we used a tiny three-word vocabulary for the vector examples above to save space, 
but a vocabulary of typical applications could be much larger. % such as 100K words.  
$\Activ$ in (\ref{eq:activ}) is a component-wise non-linear function (e.g., applying $\activ(x)=\max(x,0)$ to 
each vector component).  Thus, each computation unit generates an $\nodes$-dimensional vector 
where $\nodes$ is the number of weight vectors ($\Wei$'s rows) or {\em neurons}.  
In other words, {\em a convolution layer embodies an embedding of text regions}, which produces an $\nodes$-dim vector for each 
text region.  
In essence,
a region embedding uses co-presence and absence of words in a region as input 
to produce predictive features, 
e.g., if presence of ``easy to use'' with absence of ``not'' is a predictive indicator, 
it can be turned into a large feature value by having a negative weight on ``not'' 
(to penalize its presence) and positive weights on the other three words
in one row of $\Wei$.  
% \footnote{ 
  A more formal argument can be found in the Appendix.
%}.  
%
The $\nodes$-dim vectors from all the text regions of each document are aggregated by 
the pooling layer, by either component-wise maximum ({\em max-pooling}) or average ({\em average-pooling}), 
and used by the top layer (a linear classifier) as features for classification.  
Here we focused on the convolution layer; % due to its relevance to our work; 
for other details, \cite{JZ15} should be consulted.

%+++++++++++++++++++++++++++++++++++++++
\begin{figure}
\centering
%\begin{minipage}[b]{0.4\linewidth}
\begin{minipage}[b]{0.385\linewidth}
\includegraphics[width=\linewidth]{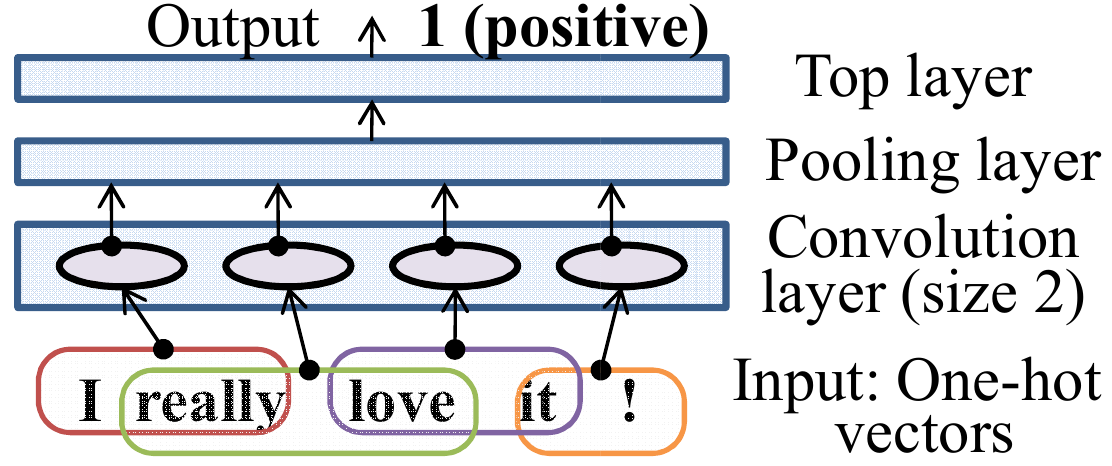}
\vspace{-0.3in}
\caption{\label{fig:textcnn} \small
One-hot \cnn\ example. % Ovals are computation units.  
Region size 2, stride 1. 
}
\end{minipage}
%\end{figure}
%+++++++++++++++++++++++++++++++++++++++
%\quad
\hfill
%+++++++++++++++++++++++++++++++++++++++
%\begin{figure}
%\centering
%\includegraphics[width=2.8in]{unscnn}
%\begin{minipage}[b]{0.45\linewidth}
\begin{minipage}[b]{0.42\linewidth}
\includegraphics[width=\linewidth]{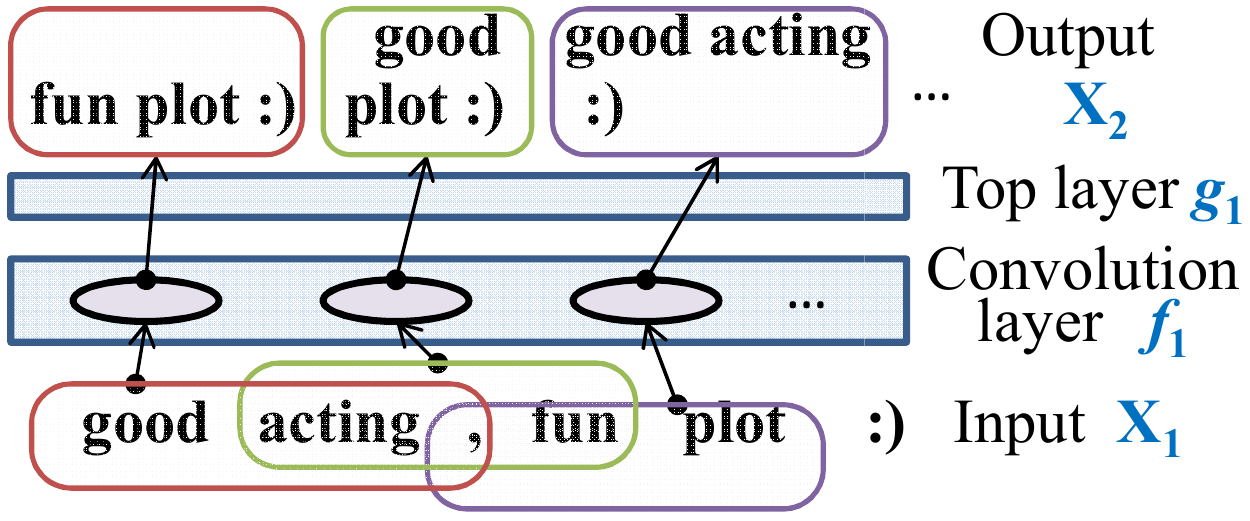}
\vspace{-0.3in}
\caption{\label{fig:unscnn} 
\small
\OurEmb\ learning by training to predict adjacent regions. 
}
\end{minipage}
\end{figure}
%+++++++++++++++++++++++++++++++++++++++

%******************************************************************************
\section{Semi-supervised \cnn\ with \ourEmbs\ for text categorization}
\label{sec:semisup}
It was shown in \cite{JZ15} that one-hot \cnn\ is effective on text categorization, 
where the essence is direct learning of an embedding of 
text regions aided by new options of input region vector representation.  
We go further along this line and propose a semi-supervised 
learning framework that % directly 
learns an embedding of text regions from 
unlabeled data and then integrates the learned embedding in supervised training.  
%
%The basic idea of our new semi-supervised learning framework for \cnn\ is related to 
%{\em two-view feature learning} \cite{AZ07} and {\em ASO} \cite{AZ05jmlr}, 
%though the resulting algorithm is very different.  
The first step is to learn an embedding with the following property.    
%+++++
\begin{definition}[\ourEmb]
A function $f_1$ is a \ourEmb\ of 
$\cXone$ 
w.r.t. $\cXtwo$ 
if there exists a function 
$g_1$
such that $P(\Xtwo|\Xone)= g_1(f_1(\Xone),\Xtwo)$ 
for any $(\Xone,\Xtwo) \in \cXone \times \cXtwo$. 
\label{def:embed}
\end{definition}
%+++++
A {\em \ourEmb} (`tv' stands for two-view) of a {\em view} ($\Xone$), by definition, preserves 
everything required to predict another view ($\Xtwo$), and it can be trained on unlabeled data. 
The motivation of \ourEmb\ is our theoretical finding 
(formalized in the Appendix) that,
essentially, a \ourEmbd\ feature vector 
$f_1(\Xone)$ is as useful as $\Xone$ for the purpose of classification {\em under ideal conditions}.  
%Being as useful as the high-dimensional original features has an advantage 
%when the embedding is low-dimensional as it reduces the complexity of the learning problem.  
The conditions essentially state that  
there exists a set $H$ of hidden concepts such that 
two views and labels of the classification task are related to each other 
{\em only} through the concepts in $H$.  
The concepts in $H$ might be, for example, ``pricey'', ``handy'', ``hard to use'', and so on for 
sentiment classification of product reviews.  
While in reality the ideal conditions may not be completely met, 
we consider them as guidance and design \ourEmb\ learning accordingly. 
%so that these conditions are met as much as possible.  

%---  related work 
\OurEmb\ learning is related to % our previous work, 
two-view feature learning \cite{AZ07} and ASO \cite{AZ05jmlr}, % , which for use in NLP tasks, 
which learn a linear embedding 
from unlabeled data through tasks such as predicting a word 
(or predicted labels) from the features associated 
with its surrounding words.  
%The learned linear embedding improved the performance of NLP tasks 
%such as named entity and syntactic chunking.  
These studies were, however, limited to a linear embedding.  
%and the text categorization experiments 
%were conducted using \bow\ vectors with a vocabulary split.  
%Our work seeks high-performance text categorization by making effective use of 
%word order with \cnn.  
%While we share the general idea of using two views, 
%the resulting methods of text categorization are very different. 
%cmt: DFU11?
A related method in \cite{DFU11} learns a word embedding so that left context and 
right context maximally correlate in terms of canonical correlation analysis. %  \cite{H36}, % no space 
%Their method is based on their analysis of word tagging assuming HMM and limited to a word embedding. 
While we share with these studies the general idea of using the relations of two views, 
we focus on nonlinear learning of region embeddings useful for the task of interest, and 
the resulting methods are very different.  
%We share the general idea of using the relations of two views, 
%specifically for learning a word embedding though, again, we share the general idea of 
%using two view relations.  
%%word2vec \cite{wvecNips13} obtains a word embedding, essentially, 
%%by training to predict sampled context words from each word. 
%
An important difference of \ourEmb\ learning from co-training is that it does not involve label guessing, 
thus avoiding risk of label contamination.  
\cite{sentICML11} used a Stacked Denoising Auto-encoder to extract 
features invariant across domains for sentiment classification from unlabeled data.  
%It is single-view learning of a document embedding with fully-connected networks
%-- the type of networks shown to be inferior to \cnns\ on our tasks \cite{JZ15}.  
%It is single-view learning of a document embedding.  However, it is for fully-connected neural networks, 
It is for fully-connected neural networks, 
which underperformed \cnns\ in \cite{JZ15}.  

%-----
%\subsection{Semi-supervised \cnn\ with \ourEmb\ for text categorization} 
\newcommand{\cSup}{{\mathcal B}}
\newcommand{\cUns}{{\mathcal U}}
Now let $\cSup$ be the base \cnn\ model  for the task of interest, and 
assume that $\cSup$ has one
%\footnote{
%  The restriction of having only one convolution layer is only for simplifying the description.  
%}
convolution layer with region size $\psz$.
Note, however, that the restriction of having only one convolution layer is merely for simplifying the description.
We propose a semi-supervised framework with the following two steps.    
\vspace{-0.075in}
\begin{enumerate} % \itemsep1pt \parskip0pt \parsep0pt
\item
{\em \OurEmb\ learning}: 
Train a neural network % \cnn\ 
$\cUns$ to predict the context from each region of size $\psz$ 
so that $\cUns$'s convolution layer generates feature vectors 
for each text region of size $\psz$ for use in the classifier in the top layer.  
It is this convolution layer, which embodies the \ourEmb, that we transfer to the supervised 
learning model in the next step.  
(Note that 
$\cUns$ differs from \cnn\ in that each small region is associated with its own target/output.)
\item
{\em Final supervised learning}: 
Integrate the learned \ourEmb\ (the convolution layer of $\cUns$) into $\cSup$, 
so that the \ourEmbd\ regions (the output of $\cUns$'s convolution layer) are used 
as an additional input to $\cSup$'s convolution layer.  
Train this final model with labeled data. % of the intended task.                                              
\end{enumerate}
\vspace{-0.075in}
These two steps are described in more detail in the next two sections. 

%-----
\subsection{Learning \ourEmbs\ from unlabeled data} % through view prediction tasks
\label{sec:unsup}

\newcommand{\unsvec}{{\mathbf u}}
\newcommand{\ActivU}{\Activ^{(\cUns)}}
\newcommand{\WeiU}{\Wei^{(\cUns)}}
\newcommand{\weiU}{\wei^{(\cUns)}}
\newcommand{\BiasU}{\Bias^{(\cUns)}}
\newcommand{\regionU}{\region^{(\cUns)}}

We create a task on unlabeled data to predict the context (adjacent text regions) from                         
each region of size $\psz$ defined in $\cSup$'s convolution layer.  
To see the correspondence to the definition of \ourEmbs, 
it helps to consider a sub-task that assigns a label (e.g., positive/negative) 
to each text region (e.g., ``, fun plot'') instead of the ultimate task of categorizing the entire document.  
This is sensible because \cnn\ makes predictions by building up from these small regions.
In a document ``good acting, fun plot :)'' as in Figure \ref{fig:unscnn}, 
the clues for predicting a label of ``, fun plot''
are ``, fun plot'' itself (view-1: $\Xone$) and its {\em context} ``good acting'' and ``:)'' (view-2: $\Xtwo$).  
$\cUns$ is trained to predict $\Xtwo$ from $\Xone$, i.e., to approximate $P(\Xtwo|\Xone)$ 
by $g_1(f_1(\Xone),\Xtwo))$ as in Definition \ref{def:embed}, 
and functions $f_1$ and $g_1$ are embodied by the convolution layer and the top layer, respectively.  
%Theorem \ref{thm:embed} says that if Assumption \ref{assump:indep} holds, then 
%the \ourEmb\ $f_1$, which is $\cUns$'s convolution layer, 
%is useful for the target task (e.g., determining positive/negative). 

Given a document $\bx$, for each text region indexed by $\ell$, $\cUns$'s convolution layer computes: 
%---
\tightDisplayBegin{2pt}
\begin{align}
  \unsvec_\iL(\bx) = \ActivU \left( \WeiU \cdot \regionU_\iL(\bx) + \BiasU \right), 
  \label{eq:unsvec}
\end{align}
\tightDisplayEnd
%---
which is the same as (\ref{eq:activ}) except for the 
superscript ``$(\cUns)$'' to indicate that these entities belong to $\cUns$.  
The top layer (a linear model for classification)  
uses $\unsvec_\iL(\bx)$ as features for prediction.  
%--
$\WeiU$ and $\BiasU$ (and the top-layer parameters) are learned through training.   
The input region vector representation $\regionU_\iL(\bx)$ can be either sequential, \bow, 
or \bongram, independent of $\region_\iL(\bx)$ in $\cSup$. 
%cmt: still true but not worth saying.  
%the only constraint is that the region size must match with $\cSup$.  

The goal here is to learn an embedding of text regions ($\Xone$), 
shared with all the text regions at every location.  
Context ($\Xtwo$) is used only in \ourEmb\ learning as prediction target (i.e., not transferred to the final model); 
thus, the representation of context should be determined to optimize the final outcome 
without worrying about the cost at prediction time. 
Our guidance is the conditions on the relationships between the two views mentioned above; 
ideally, the two views should be related to each other only through the relevant concepts. 
We consider the following two types of target/context representation. 
\tightpara{Unsupervised target}
A straightforward vector encoding of context/target $\Xtwo$ is 
\bow\ vectors of the text regions on the left and right to $\Xone$.  If we distinguish 
the left and right, the target vector is $2\vsz$-dimensional with vocabulary $\voc$, and if not, $\vsz$-dimensional.  
One potential problem of this encoding is that adjacent regions often have syntactic relations
(e.g., ``the'' is often followed by an adjective or a noun), 
which are typically irrelevant to the task (e.g., to identify positive/negative sentiment) and therefore undesirable.  
A simple remedy we found effective is {\em vocabulary control} of context  
to remove function words % (articles, pronouns, and propositions) 
(or stop-words if available) 
from (and only from) the target vocabulary.  
%One can also remove stop-words 
%if stop-words, the words considered to be irrelevant to the task, are available, 
%to reduce undesirable relations between the two views.  

\tightpara{Partially-supervised target} 
Another context representation that we consider is partially supervised in the sense that it uses  
labeled data.  First, we train a \cnn\ with the labeled data for the intended task and apply it 
to the unlabeled data.  Then we discard the predictions and 
only retain the internal output of the convolution layer, which is an $m$-dimensional vector 
for each text region where $m$ is the number of neurons.  We use these $m$-dimensional vectors 
to represent the context.  
\cite{JZ15} has shown, by examples, that each dimension of these vectors roughly 
represents concepts relevant to the task, e.g., `desire to recommend the product', 
`report of a faulty product', and so on. 
Therefore, 
an advantage of this representation is that there is no obvious noise % such as syntactic relations 
between $\Xone$ and $\Xtwo$ since context $\Xtwo$ is represented only by the concepts 
relevant to the task.  
A disadvantage is that it is only as good as the supervised \cnn\ that produced 
it, which is not perfect and in particular, some relevant concepts would be missed if they did not 
appear in the labeled data.  
%could be the performance bottleneck especially when the labeled data is small.  
%
%cmt: say a similar idea was used in \cite{AZ05jmlr}.  

%---
\subsection{Final supervised learning: integration of \ourEmbs\ into supervised \cnn}
\label{sec:finalsup}

We use the \ourEmb\ obtained from unlabeled data   
to produce {\em additional input} to $\cSup$'s convolution layer, 
by replacing  
$\Activ \left( \Wei \cdot \region_\iL(\bx) + \Bias \right)$  
(\ref{eq:activ}) 
with: 
%--
%\tightDisplayBegin{2pt}
\begin{align}
&\Activ \left( \Wei \cdot \region_\iL(\bx) + \Vei \cdot \unsvec_\iL(\bx) + \Bias \right)~, 
\label{eq:semi}
\end{align}
%\tightDisplayEnd
%--
where $\unsvec_\iL(\bx)$ is defined by (\ref{eq:unsvec}), i.e., 
$\unsvec_\iL(\bx)$ is the output of the \ourEmb\ applied to the $\iL$-th region. 
We train this model with the labeled data of the task; that is, 
we update the weights $\Wei$, $\Vei$, bias $\Bias$, and the top-layer parameters so that 
the designated loss function is minimized on the labeled training data.  
$\WeiU$ and $\BiasU$ can be either fixed or updated for fine-tuning, 
and in this work we fix them for simplicity. 

Note that while (\ref{eq:semi}) takes a \ourEmbd\ region as input, 
(\ref{eq:semi}) itself is also an embedding of text regions; let us call it 
(and also (\ref{eq:activ}))
a {\em supervised embedding}, as it is trained with labeled data, 
to distinguish it from \ourEmbs.  
That is, we use \ourEmbs\ to improve the supervised embedding.  
Note that (\ref{eq:semi}) can be naturally extended to accommodate multiple \ourEmbs\ by 
%\tightDisplayBegin{2pt}
\begin{align}
&\Activ \left(\Wei \cdot \region_\iL(\bx) + \sum_{i=1}^k{\Vei}^{(i)} \cdot \unsvec^{(i)}_\iL(\bx) + \Bias \right)~, 
\label{eq:semimore} 
\end{align}
%\tightDisplayEnd
so that, for example, two types of \ourEmb\ (i.e., $k=2$) obtained with the unsupervised target
and the partially-supervised target can be used at once, 
which can lead to performance improvement as they complement each other, 
as shown later.  

\section{Experiments}
\label{sec:semiexp}

%This section reports the semi-supervised experiments.  
%Our code and settings for reproducing the experiments 
%will be made available publicly on the internet. 

%Our code and settings for reproducing the experiments
%will be made available publicly. %  on the internet. 
Our code and the experimental settings are available at {\tt \small riejohnson.com/cnn\_download.html}. 

%--------------------------
%\subsection{Data}
\tightpara{Data}
%\label{sec:data}
We used the three datasets used in \cite{JZ15}: IMDB, Elec, and RCV1, 
as summarized in Table \ref{tab:data}. 
%
%The task of IMDB is to determine if the movie reviews are positive 
%or negative.  
IMDB 
%\footnote{
%  \url{http://ai.stanford.edu/~amaas/data/sentiment/}
%}
(movie reviews) \cite{MDPHNP11} comes with an unlabeled set. % of 50K reviews.  
To facilitate comparison with previous studies, 
we used a union of this set and the training set as unlabeled data. 
Elec
%\footnote{
%  \url{riejohnson.com/cnn_data.html}
%} 
consists of Amazon reviews of electronics products.  
To use as unlabeled data, we chose 200K reviews from the same data source 
%\footnote{
%  {\tt electronics.tar.gz} at \url{http://snap.stanford.edu/data/web-Amazon.html} 
%}, 
so that they are disjoint from the training and test sets, 
and that the reviewed products are disjoint from the test set. 
%which makes the task harder and more realistic, considering the situation that 
%new products come out after training. 
%The unlabeled set we used will be made available publicly on the internet.  
%
On the 55-way classification of the second-level topics on RCV1 (news), 
%the articles in a 10-month period were used as unlabeled data, 
unlabeled data was chosen to be 
disjoint from the training and test sets. 
On the multi-label categorization of 103 topics on RCV1, 
since the official LYRL04 split for this task divides the entire corpus into a training set and a test set, 
%there is no room for making a disjoint unlabeled set; 
we used the entire test set 
as unlabeled data (the transductive learning setting). 

%+++++++++++++++++++++++++++++++++++++++++++++++
\begin{table}[htb]
\begin{center}
\begin{small}
\begin{tabular}{|c|r|r|c|c|c|} 
\hline                         
              & \#train & \#test & \#unlabeled & \#class & output \\
\hline
IMDB          & 25,000  & 25,000 &  75K  (20M words)  &  2 & \multirow{1}{*}{Positive/negative} \\
\cline{1-5}
Elec          & 25,000  & 25,000 &  200K (24M words)  &  2 & sentiment \\
\hline
\multirow{2}{*}{RCV1}
              & 15,564  & 49,838 & 669K  (183M words) & 55 (single) & \multirow{2}{*}{Topic(s)} \\ % 668,640
  \cline{2-5}              
              & 23,149  &781,265 & 781K (214M words)  & 103 (multi)$\dagger$ &        \\
\hline
\end{tabular}
\end{small}
\vspace{-0.1in}
\caption{
\label{tab:data} \small
Datasets.  $\dagger$The multi-label RCV1 is used only in Table \ref{tab:rcv-semi}. 
}
\end{center}
\end{table}
%+++++++++++++++++++++++++++++++++++++++++++++++

%-----
\newcommand{\lossWei}{\alpha}
%\subsection{Implementation of \cnn\ with \ourEmb}
%\tightpara{\cnn\ with \ourEmbs}
\tightpara{Implementation}
We used the one-layer \cnn\ models found to be effective in \cite{JZ15}  
as our base models $\cSup$, 
%which are \scnn\ with max-pooling (region size chosen from 
%\{3,4,5\}) on IMDB/Elec 
%and \bcnn\ with 10 average-pooling units (region size 20) on RCV1. 
namely, \scnn\ on IMDB/Elec and \bcnn\ on RCV1.  
%we tested region size 3, 4, and 5 on IMDB/Elec while fixing it to 20 on RCV1.  
%
\OurEmb\ training % on unlabeled data 
minimized weighted square loss 
$\sum_{i,j} \lossWei_{i,j}(\bz_i[j]-\bp_i[j])^2$ 
where $i$ goes through the regions, 
$\bz$ % is the \bow\ vector of 
represents 
the target regions, 
and $\bp$ is the model output. 
The weights $\lossWei_{i,j}$ were set to balance the loss originating from 
the presence and absence of words (or concepts in case of the partially-supervised target) 
and to speed up training by eliminating some negative examples, 
similar to negative sampling of \cite{wvecNips13}.   
%More precisely, 
%let $S_+=\{(i,j)|\bz_i[j] \ne 0\}$ (positive examples).  We sample  
%negative examples (i.e., $(i,j)$ such that $\bz_i[j]=0$) to form $S_-$ so that $|S_-| = k|S_+|$ 
%for given $k$ and set    
%$\lossWei_{i,j}=1$ for $(i,j) \in S_+$; $1/k$ for $(i,j) \in S_-$; 0 otherwise.  
%$k$ was fixed to 5. 
%
%on sentiment classification, 
%we set $\bz$ to be the concatenation of two \bow\ vectors of adjacent regions on the left and right; 
%on topic classification, we merged two vectors (left and right) into one since we conjectured 
%that the loss of the left/right distinction would not be a problem while it has a merit of 
%faster training.  
To experiment with the unsupervised target, we set $\bz$ to be \bow\ vectors of adjacent regions 
on the left and right, while only retaining the 30K most frequent words with {\em vocabulary control}; 
on sentiment classification, % function words (articles, pronouns, and propositions) were removed, 
%articles, pronouns, and propositions were removed, 
function words were removed, 
and on topic classification, numbers and stop-words provided by \cite{LYRL04} were removed. 
Note that these words were removed 
from (and only from) the target vocabulary.  
To produce the partially-supervised target, we first trained the supervised \cnn\ models 
with 1000 neurons 
and applied the trained convolution layer to unlabeled data to generate 1000-dimensional 
vectors for each region.  
%---  too much detail 
%In order to speed up training, we approximated the target vectors 
%by only keeping the 10 largest components and zeroed out the rest.  
% 
%We tested \bow\ and \bongrams\ as the input region vector representation for \ourEmb\ learning.  
%% with the 30K most frequent words (or $n$-grams).  
%Table \ref{tab:ourembs} summarizes the tested \ourEmbs. 
The rest of implementation follows \cite{JZ15}; i.e., supervised models minimized square loss with $L_2$ regularization 
and optional dropout \cite{dropout12}; $\Activ$ and $\ActivU$ were the rectifier; 
response normalization was performed; 
optimization was done by SGD.  

%---------------------------------------
\tightpara{Model selection} 
On all the tested methods, 
tuning of meta-parameters % (e.g., the regularization parameter) 
was done by testing the models on the held-out portion 
of the training data, and then the models were re-trained with the chosen 
meta-parameters using the entire training data.  

%------------------
%-------------------
\subsection{Performance results} 

\paragraph{Overview} 
After confirming the effectiveness of our new models in comparison with the supervised \cnn, 
we report the performances of \cite{Kim14}'s \cnn, % originally studied on sentence classification, 
which relies on word vectors pre-trained with a very large corpus (Table \ref{tab:semisup}). 
% as input to three convolution layers (Table \ref{tab:semisup}).  
%---
Besides comparing the performance of approaches as a whole, it is also of interest 
to compare the usefulness of what was learned from unlabeled data; 
therefore, we show how it performs if we integrate 
the word vectors into our base model one-hot \cnns\ (Figure \ref{tab:gn}). 
In these experiments we also test word vectors trained by word2vec \cite{wvecNips13} 
on our unlabeled data
(Figure \ref{fig:w2v}). 
%---
We then compare our models with two standard semi-supervised methods, transductive SVM (TSVM) \cite{T99} and 
co-training % \cite{BM98}  % no space 
(Table \ref{tab:semisup}), 
and with the previous best results in the literature (Tables \ref{tab:imdb}--\ref{tab:rcv-semi}).  
In all comparisons, our models outperform the others. 
In particular, our region \ourEmbs\ are shown to be more compact and effective than region 
embeddings obtained by simple manipulation of word embeddings, which supports 
our approach of using region embedding instead of word embedding.

%+++++++++++++++++++++++++++++++++++++++++++++++
\begin{table}[h]
\begin{center}
\begin{small}
\begin{tabular}{|l|l|l|} 
\hline
names in Table \ref{tab:semisup}
             & $\Xone$: $\regionU_{\iL}(\bx)$ & $\Xtwo$: target of $\cUns$ training \\
\hline             
\ourEmbUns\  & \bow\ vector           & \bow\ vector \\
\ourEmbPar\  & \bow\ vector           & output of supervised embedding \\
\ourEmbUnsN\ & bag-of-\{1,2,3\}-gram vector& \bow\ vector \\
\hline
\end{tabular}
\end{small}
\vspace{-0.1in}
\caption{\small \label{tab:ourembs}
  Tested \ourEmbs.  
%  The feature region used the most frequent 30K words or 1-3grams in the training data.  
}
\end{center}
%\end{table}
%+++++++++++++++++++++++++++++++++++++++++++++++
%\begin{table}
\begin{center}
\begin{tabular}{c|c|c|c|r|r|r|} 
\cline{2-7}         
    & \multicolumn{3}{|c|}{}&IMDB & Elec & RCV1 \\
\cline{2-7}
1&  \multicolumn{3}{|c|}{linear SVM with 1-3grams \cite{JZ15}} &10.14& 9.16 & 10.68 \\
\cline{2-7}
2&  \multicolumn{3}{|c|}{linear TSVM with 1-3grams}            & 9.99 & 16.41 & 10.77 \\ % 11.29
\cline{2-7}
3& \multicolumn{3}{|c|}{\cite{Kim14}'s \cnn\          } & 9.17 & 8.03 &10.44\\   
\cline{2-7}
4&  \multicolumn{3}{|c|}{One-hot \cnn\ (simple) \cite{JZ15}}& 8.39 & 7.64 & 9.17 \\                                                
\cline{2-7}
5&  \multicolumn{3}{|c|}{One-hot \cnn\ (simple) co-training best} &(8.06)&(7.63)&(8.73)\\ % imdb:plsz=10k,inc=100,14-day,vocview
                                              % elec:plsz=10k,inc=100,locview; nnexp-jan2015.tex
                                              % rcv1:qnstu30kbow,loview, it's "ns"
\cline{2-7}
6&   &\multirow{2}{*}{\ourEmbUns} &100-dim & 7.12 & 6.96 & 8.10  \\ % 100
7&   &                            &200-dim & 6.81 & 6.69 & 7.97 \\ % 200
   \cline{3-7}  
8& \multirow{3}{*}{Our \cnn\   }   
   &\multirow{2}{*}{\ourEmbPar} &100-dim & 7.12 & 6.58 & 8.19  \\  % 100 
9&   &                            &200-dim & 7.13 & 6.57 & 7.99 \\ % 200
   \cline{3-7}  
10&  &\multirow{2}{*}{\ourEmbUnsN} &100-dim & 7.05 & 6.66 & 8.13  \\ % 100
11&  &                             &200-dim & 6.96 & 6.84 & 8.02 \\ % 200
   \cline{3-7}  
12&  & all three &100$\times$3&{\bf 6.51}&{\bf 6.27}&{\bf 7.71}\\
\cline{2-7}
\end{tabular}
\vspace{-0.1in}
\caption{\small \label{tab:semisup}
  Error rates (\%). 
  For comparison, 
  all the \cnn\ models were constrained to have % one convolution layer with 
  1000 neurons.  
  The parentheses around the error rates indicate that co-training meta-parameters were tuned on  
  test data.
%  The best results are highlighted.  
}
\end{center}
\end{table}
%+++++++++++++++++++++++++++++++++++++++++++++++

%---
\tightpara{Our \cnn\ with \ourEmbs} 
We tested three types of \ourEmb\ as summarized in Table \ref{tab:ourembs}.  
The first thing to note is that 
all of our \cnns\ (Table \ref{tab:semisup}, row 6--12) outperform 
%all the baselines (row 1--5). 
their supervised counterpart in row 4.  
This confirms the effectiveness of the framework we propose.  
In Table \ref{tab:semisup}, for meaningful comparison, 
all the \cnns\ are constrained to have exactly one convolution layer (except for \cite{Kim14}'s \cnn)
with 1000 neurons.  
The best-performing supervised \cnns\ within these constraints (row 4) are: 
%\scnn\ (region size 3; one max-pooling unit) on IMDB and Elec 
%and \bcnn\ (region size 20; 10 average-pooling units) on RCV1. 
\scnn\ (region size 3) on IMDB and Elec 
and \bcnn\ (region size 20) on RCV1\footnote{
  The error rate on RCV1 in row 4 slightly differs from \cite{JZ15} because here we did not use the stopword list.
}. 
They also served as our base models $\cSup$ (with region size parameterized on IMDB/Elec).  
More complex supervised \cnns\ from \cite{JZ15} will be reviewed 
later. % in Tables \ref{tab:imdb} and \ref{tab:elec}.
On sentiment classification (IMDB and Elec), 
the region size chosen by model selection for our models % from \{3,4,5\} 
%for \ourEmbUns\ 
was 5, larger than 3 for the supervised \cnn.  
This indicates that unlabeled data enabled effective use of larger regions which are more predictive 
but might suffer from data sparsity in supervised settings.  
%For simplicity, we used the same region size 5 for the other two types of \ourEmbs\ 
%instead of performing model selection.  
%On topic classification (RCV1), we simply fixed the region size to 20, same as the 
%supervised \cnn. 

`\ourEmbUnsN' % (defined in Table \ref{tab:ourembs}) 
(rows 10--11) 
uses a bag-of-$n$-gram vector to initially represent each region, thus, retains word order 
partially within the region.  
When used individually, \ourEmbUnsN\ did not outperform the other \ourEmbs, which use \bow\ instead (rows 6--9).  
But we found that it contributed to error reduction when combined with the others (not shown in the table).  
This implies that it learned from unlabeled data predictive information that the other two embeddings missed.  
The best performances (row 12) were obtained by using all the three types of 
\ourEmbs\ at once according to (\ref{eq:semimore}).  
By doing so, the error rates were improved by nearly 1.9\% (IMDB) and 1.4\% (Elec and RCV1) 
compared with the supervised \cnn\ (row 4), 
as a result of the three \ourEmbs\ with different strengths %/weaknesses, 
complementing each other. 

%--
%\tightpara{Previous \cnn}
\tightpara{\cite{Kim14}'s \cnn}
It was shown in \cite{Kim14} that \cnn\ that uses the Google News word vectors as input 
is competitive on a number of sentence classification tasks.  
These vectors (300-dimensional) were trained by the authors of word2vec \cite{wvecNips13}
on a very large Google News ({\em GN}) corpus (100 billion words; 500--5K times larger than our unlabeled data). 
\cite{Kim14} argued that these vectors can be useful for various tasks, serving as `universal feature extractors'.  
%\footnote{
%  \cite{Kim14} studied fine-tuning of these universal word vectors to learn task-specific information on labeled data, 
%  but the results were mixed.  
%  For simplicity, we did not test fine-tuning either with our models or \cite{Kim14}'s models. 
%}. 
We tested \cite{Kim14}'s \cnn, which is equipped with three convolution layers with different region sizes (3, 4, and 5) and max-pooling, 
using the GN vectors as input.  
Although \cite{Kim14} used only 100 neurons for each layer, we changed it to 400, 300, and 300 to match 
the other models, which use 1000 neurons.  
Our models clearly outperform these models (Table \ref{tab:semisup}, row 3) with relatively large differences.  
\begin{figure}[t]
\centering
\begin{minipage}{0.25\linewidth}
\begin{small}
\begin{tabular}{|c|c|c|} 
\hline      
       & concat & avg \\
\hline       
IMDB & 8.31 & 7.83 \\
Elec & 7.37 & 7.24 \\
RCV1 & 8.70 & 8.62 \\
\hline
\end{tabular}
\end{small}
\vspace{-0.1in}
\caption{\small \label{tab:gn}
  GN word vectors integrated into our base models. 
  Better than \cite{Kim14}'s \cnn\ (Table \ref{tab:semisup}, row 3). 
}
\end{minipage}
\hfill
%\end{table}
%+++++++++++++++++++++++++++++++++++++++
%\begin{figure}
%\begin{minipage}{0.7\linewidth}
\begin{minipage}{0.68\linewidth}
\centering
\includegraphics[width=1\linewidth]{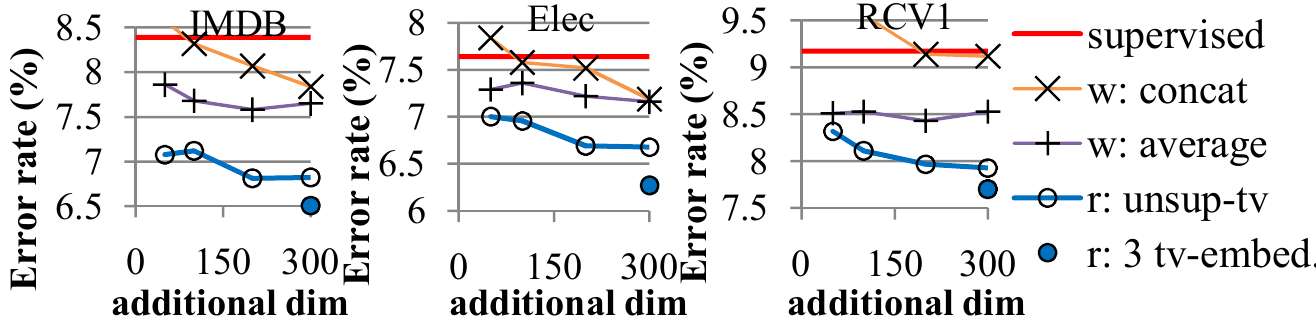}
\vspace{-0.3in}
\caption{\small \label{fig:w2v} 
Region \ourEmbs\ vs. word2vec word embeddings.  Trained on our unlabeled data. 
%For direct comparison, word embeddings are integrated into our base models by 
%concatenation/averaging of the word vectors in each region.  
%Additinal input to the supervised region embedding was 
%produced by either concatenating or averaging the word vectors for the words in each region. 
$x$-axis: dimensionality of the additional input to supervised region embedding. 
%`concat (GN)', IMDB:8.27, Elec:7.28, and RCV1:8.36, could not fit in the charts due to the large dim. 
%%We could not fit `concat (GoogleNews)' in the charts due to its large dimensionality caused by 
%%concatenation; its error rates are similar to `average (GoogleNews)'.  
`r:': region, `w:': word. 
}
%\end{figure}
\end{minipage}
\end{figure}
%+++++++++++++++++++++++++++++++++++++++
%\tightpara{word2vec word embedding vs. region \ourEmbs} 
\tightpara{Comparison of embeddings}
Besides comparing the performance of the approaches as a whole, it is also of interest 
to compare the usefulness of what was learned from unlabeled data.  
For this purpose, we experimented with integration of a word embedding into our base models 
using two methods; 
one takes the concatenation, and the other takes the average, 
of word vectors for the words in the region. 
These provide additional input to the supervised 
embedding of regions in place of $\unsvec_{\iL}(\bx)$ in (\ref{eq:semi}).  
That is, for comparison, we produce a region embedding from a word embedding to replace a region \ourEmb.  
We show the results with two types of word embeddings: the GN word embedding above (Figure \ref{tab:gn}), and 
word embeddings that we trained with the word2vec software on our unlabeled data, i.e., the same data 
%same unlabeled data 
% (in-domain but much smaller than GN) 
as used for \ourEmb\ learning and all others (Figure \ref{fig:w2v}). 
Note that 
Figure \ref{fig:w2v} plots error rates in relation to the dimensionality of the produced 
additional input; a smaller dimensionality has an advantage of faster training/prediction.  
%The error rates of the GN concatenation were more or less similar to GN averaging; 
%we could not fit them in the charts due to the large dimensionality. 

On the results, 
first, the region \ourEmb\ is more useful for these tasks than the tested word embeddings 
since the models with a \ourEmb\ % (the lines with circles) 
clearly outperform all the models with a word embedding.  
Word vector concatenations of much higher dimensionality than those shown in the figure 
still underperformed 100-dim region \ourEmb. 
%Second, our mechanism of using information from unlabeled data is more effective 
%than \cite{Kim14}'s \cnn\ since our \cnns\ with GN (Figure \ref{tab:gn}) outperform \cite{Kim14}'s \cnns\ with GN (Table \ref{tab:semisup}, row 3).  
%%Third, combination of three \ourEmbs\ (solid circles) performs the best 
%%even though its dimensionality is only as large as GN averaging. 
%Third, 
Second, 
since our region \ourEmb\ takes the form of 
$\Activ(\Wei \cdot \region_{\iL}(\bx) + \Bias)$ with $\region_{\iL}(\bx)$ being a \bow\ vector, 
the columns of $\Wei$ correspond to words, and therefore, $\Wei \cdot \region_{\iL}(\bx)$ 
is the sum of $\Wei$'s columns whose corresponding words are in the $\iL$-th region.  
Based on that, one might wonder why we should not simply use the sum or average of word vectors obtained by 
an existing tool such as word2vec instead.  % in place of our region embedding.  
The suboptimal performances of `w: average' (Figure \ref{fig:w2v}) tells us that this is a bad idea.  
%---
We attribute it to the fact that region embeddings learn predictiveness of 
co-presence and absence of words in a region; a region embedding can be more expressive than 
averaging of word vectors.  
%---
Thus, an {\em effective} and {\em compact} region embedding cannot be 
trivially obtained from a word embedding. %, which supports our approach. 
In particular, effectiveness of the combination of three \ourEmbs\ (`r: 3 tv-embed.' in Figure \ref{fig:w2v}) stands out. 

%\iffalse
Additionally,  
our mechanism of using information from unlabeled data is more effective 
than \cite{Kim14}'s \cnn\ since our \cnns\ with GN (Figure \ref{tab:gn}) outperform 
\cite{Kim14}'s \cnns\ with GN (Table \ref{tab:semisup}, row 3).  
This is because 
in our model, one-hot vectors (the original features) compensate for potential information loss 
in the embedding learned from unlabeled data.  
This, as well as region-vs-word embedding, is a major difference between our model and \cite{Kim14}'s model.  
%\fi

%--- 
\tightpara{Standard semi-supervised methods}  
Many of the standard semi-supervised methods are not applicable to \cnn\ 
as they require \bow\ vectors as input.  
We tested TSVM with bag-of-\{1,2,3\}-gram vectors using SVMlight.  
TSVM underperformed the supervised SVM\footnote{
  Note that for feasibility, 
  we only used the 30K most frequent $n$-grams in the TSVM experiments, thus, showing 
  the SVM results also with 30K vocabulary for comparison, 
  though on some datasets SVM performance can be improved by use of all the $n$-grams 
  (e.g., 5 million $n$-grams on IMDB) \cite{JZ15}. 
  This is because the computational cost of TSVM (single-core) turned out to be high, taking several days 
  even with 30K vocabulary.  
} 
on two of the three datasets (Table \ref{tab:semisup}, rows 1--2). 
Since co-training is a meta-learner, it can be used with \cnn.  
Random split of vocabulary and split into the first and last half of each document were tested. 
%
%However, 
%many repetitions of \cnn\ training (as the labeled set expands) is extremely time-consuming, 
%and we could not afford to follow our protocol of parameter tuning and re-training.  
%We instead 
To reduce the computational burden, we 
report the best (and unrealistic) co-training performances 
obtained by optimizing the meta-parameters
%\footnote{
%%%  Two types of view split were tested: 
%%%  random split of vocabulary and split into the first and last half of each document. 
%  We tested random split of vocabulary and split into the first and last half of each document. 
%}
including when to stop {\em on the test data}.  
%after running each setting up to 14 days on GPU.  
Even with this unfair advantage to co-training, 
co-training (Table \ref{tab:semisup}, row 5) clearly underperformed our models.    
%cmt: this can go if no space 
The results demonstrate the difficulty of effectively using unlabeled data on these tasks, 
given that the size of the labeled data is relatively large.  
%The poor results of these standard methods might be due to the difference in focus: 
%smaller labeled data vs. higher performance (our focus).  

%\paragraph{Our approach with word embeddings} 
%Though encouraging, the word vector results above leave us with several questions.  
%Why does \cnn\ with pre-trained word vectors underperform the \cnn\ with \ourEmbs\ of regions
%on sentiment classification\footnote{
%  We should consider sentiment classification and topic classification 
%  separately since \cnn\ of different types perform well on these tasks.
%}?  Both models make use of word order 
%
%Our \ourEmb\ learning and word vector/embedding learning, in particular, the word2vec skip-gram model, 
%share the essence since in both the unsupervised training objective to prediction of one part of text 
%from another part; moreover, word embedding learning of this form can be regarded as a special case of 
%\ourEmb\ learning that restricts the size of text regions to be embedded to 1 with a particular encoding 
%of the adjacent/target views.  
%

%%+++++++++++++++++++++++++++++++++++++++
%%\def\small{\@setsize\small{10.5pt}\ixpt\@ixpt}
%\begin{figure*}[t]
%\centering
%\includegraphics[width=6.2in]{semidim}
%\caption{\label{fig:dim} 
%Error rates in relation to the dimensionality of \ourEmbs.  
%The horizontal lines are the supervised baselines. 
%All models were with one convolution layer with 1000 neurons.   
%}
%\end{figure*}
%%+++++++++++++++++++++++++++++++++++++++

%+++++++++++++++++++++++++++++++++++++++++++++++
\begin{table}
%\begin{center}
\begin{minipage}{0.4\linewidth}
\centering
\begin{small}
\begin{tabular}{|l|l|c|}      
\hline
%NB+SVM 1-2grams         \cite{WM12} & 8.78 & Ensemble \\
%\hline
\nbw\ 1-3grams      \cite{MMRB14}    & 8.13 &     --     \\
\hline
%\sscnn\ (1K$\times$2) \cite{JZ15}   & 8.04 &     -- \\
%\sscnn\ (3K$\times$2 neurons) \cite{JZ15}  & 7.94 &     -- \\
%\ssbcnn\  \cite{JZ15}         
\cite{JZ15}'s best \cnn 
                                    & 7.67 &     -- \\
\hline                                    
Paragraph vectors \cite{LM14}       & 7.46 & Unlab.data\\
\hline
%Sentence vectors+RNN-LM+\nbw\ \cite{MMRB14} 
%Ensemble of 3 inde-
%                                    &\multirow{2}{*}{7.43} & Ensemble+ \\
%\multicolumn{1}{|r|}{pendent models \cite{MMRB14}} &                      & unlab.data \\
Ensemble of 3 models \cite{MMRB14} & 7.43 & Ens.+unlab. \\
\hline
%\ourEmbPar\  (200)        & 7.09 & \multirow{4}{*}{Unlab.data} \\
%\ourEmbUnsN\ (200)        & 6.93 & \\
%\ourEmbUns\  (200)        & 6.92 & \\
%three \ourEmbShort\ (100$\times$3)  
Our best 
      &{\bf 6.51} & Unlab.data\\ 
\hline
\end{tabular}
\end{small}
\vspace{-0.15in}
\caption{\small \label{tab:imdb}
IMDB: previous error rates (\%). 
%  `1K' and `3K' indicate the number of neurons per convolution layer.  
}
%\end{center}
%\end{table}
\end{minipage}
%\hfill
\hspace{0.5in}
%+++++++++++++++++++++++++++++++++++++++++++++++
%\begin{table}
%\begin{center}
\begin{minipage}{0.4\linewidth}
\centering
\begin{small}
\begin{tabular}{|l|l|c|}      
%\multicolumn{3}{c}{}\\
%\multicolumn{3}{c}{}\\
\hline
SVM 1-3grams   \cite{JZ15}     & 8.71  & --\\
\hline
dense NN 1-3grams \cite{JZ15}  & 8.48  & --    \\
\hline
\nbw\ 1-3grams \cite{JZ15}     & 8.11  & --   \\
%\sscnn\        \cite{JZ15}    & 7.48  & --   \\
%\ssbcnn\       \cite{JZ15}           
\hline
\cite{JZ15}'s best \cnn        & 7.14  & --   \\
\hline
%\ourEmbUnsN\ (200) & 6.77 & \multirow{4}{*}{Unlab.data} \\  
%\ourEmbUns\ (200) & 6.59 &  \\
%\ourEmbPar\ (200) & 6.46 &  \\
%three \ourEmbShort\ (100$\times$3)      
Our best 
  &{\bf 6.27} & Unlab.data\\ 
\hline
\end{tabular}
\end{small}
\vspace{-0.15in}
\caption{\small \label{tab:elec}
Elec: previous error rates (\%). 
}
%\end{center}
\end{minipage}
\end{table}
%+++++++++++++++++++++++++++++++++++++++++++++++
%+++++++++++++++
\begin{table}
\begin{center}
\begin{small}
\begin{tabular}{|l|c|c|c|} 
\hline
models                         & micro-F & macro-F & extra resource \\
\hline
SVM \cite{LYRL04}  & 81.6     & 60.7 & -- \\
%Linear (square loss) \bowone\  & 81.7  & 62.3  \\
\bcnn\ \cite{JZ15} & 84.0     & 64.8 & -- \\ 
\hline
\bcnn\ w/ three \ourEmbShort\ &{\bf 85.7}&{\bf 67.1}& Unlabeled data\\ % nsst 
\hline
\end{tabular}
\end{small}
\vspace{-0.1in}
\caption{\small \label{tab:rcv-semi} 
RCV1 micro- and macro-averaged F % -measure results 
on the multi-label task (103 topics) 
with the LYRL04 split. % (\#train: 23149, \#test: 781265). 
}
\end{center}
\end{table}
%+++++++++++++++

%\tightpara{IMDB: comparison with previous results}
%\tightpara{IMDB and Elec: previous results} % just to save space ...
\tightpara{Comparison with the previous best results}
We compare our models 
with the previous best results on IMDB (Table \ref{tab:imdb}). 
%To our knowledge, the previous best non-ensemble result on IMDB is 7.46, 
%using paragraph vectors and the same unlabeled data we used \cite{LM14}.  
%\cite{MMRB14}
%produced 7.43 %cmt: the latest (feb04) revision says 7.43. 
%by combining three independently-trained models including 
%a semi-supervised model. 
% Our best \cnn, the \cnn\ with the three \ourEmbs, outperforms the previous 
Our best model with three \ourEmbs\ outperforms the previous best results by nearly 0.9\%. 
All of our models with a single \ourEmbShort\ (Table \ref{tab:semisup}, row 6--11) also perform better than the previous 
results. 
Since Elec is a relatively new dataset, we are not aware of any previous semi-supervised results.  
Our performance is better than \cite{JZ15}'s best supervised \cnn, 
which has a complex network architecture of three convolution-pooling 
pairs in parallel (Table \ref{tab:elec}).  
%\paragraph{RCV1: multi-label results}
To compare with the benchmark results in \cite{LYRL04}, we tested our model on 
the multi-label task with the LYRL04 split \cite{LYRL04} on RCV1, in which more than one 
out of 103 categories can be assigned to each document.
Our model outperforms 
the best SVM of \cite{LYRL04} and the best supervised \cnn\ of \cite{JZ15} (Table \ref{tab:rcv-semi}). 
%The results demonstrate the effectiveness of our approach.  

%*******************
\section{Conclusion}
\label{sec:conc}
This paper proposed a new semi-supervised \cnn\ framework for text categorization 
that learns embeddings of text regions with unlabeled data and then labeled data. 
As discussed in Section \ref{sec:textcnn},
a region embedding is trained to learn the predictiveness of co-presence 
and absence of words in a region.  
In contrast, a word embedding is trained to only represent individual words in isolation.  
Thus, a region embedding can be more expressive than simple averaging of word vectors 
in spite of their seeming similarity.  
Our comparison of embeddings confirmed its advantage; our region \ourEmbs, 
which are trained specifically for the task of interest, 
are more effective than the tested word embeddings.   
Using our new models, we were able to achieve 
higher performances than the previous studies on sentiment classification and topic classification. 

%This paper proposed a new semi-supervised \cnn\ framework for text categorization 
%that learns embeddings of text regions with unlabeled data and then labeled data. 
%We showed that \ourEmbs\ of the regions, 
%which we train specifically for the task of interest, 
%are more effective and compact than region embeddings obtained by manipulation of 
%general-purpose word vectors.  
%Using our new models, we were able to achieve 
%higher performances than the previous studies on sentiment classification and topic classification.  

%**************
\begin{appendices}

\newcommand{\bxi}{\bx^i}  % subscript indicates the i-th component, so make this the superscript ... 
\newcommand{\bW}{{\mathbf W}}
\newcommand{\bb}{{\mathbf b}}
\newcommand{\bbi}{{\bb_i}}  % b's i-th component 
\newcommand{\bvi}{{\bv_i}}  % v's i-th component 
\newcommand{\bu}{{\mathbf u}}
\newcommand{\bA}{{\mathbf A}}
\newcommand{\bB}{{\mathbf B}}
\newcommand{\bC}{{\mathbf C}}
\newcommand{\bU}{{\mathbf U}}

\newcommand{\RegEmb}{{RETEX}}
\newcommand{\reg}{{\cal R}}

\newcommand{\Real}{\mathbb{R}}  % just to be consistent with the main text ... 
\newtheorem{proposition}{Proposition}

\section{Theory of \ourEmb}

%In this section we first present a theoretical analysis of \ourEmb\
%and then apply it to some existing methods.  

% \subsection{Theory of \ourEmb}
\label{sec:theory}
Suppose that we observe two views
$(\Xone,\Xtwo) \in \cXone \times \cXtwo$ of the input, and a target label $Y \in \cY$ of interest, 
where $\cXone$ and $\cXtwo$ are finite discrete sets.  
\begin{assumption}
%\tightpara{Assumption 1.}
%{\em 
Assume that there exists a set of hidden states $\cH$ such that % any 
$\Xone$, $\Xtwo$, and $Y$ are conditionally independent given 
%a hidden state 
$\hh$ in $\cH$, and that 
the rank of matrix $[P(\Xone,\Xtwo)]$ is $|\cH|$.
%}
\label{assump:indep}
\end{assumption}
%+++
\begin{theorem}
%\tightpara{Theorem 1.}
%{\em 
  Consider a \ourEmb\ $f_1$ of 
  $\cXone$ w.r.t. $\cXtwo$.  
  Under Assumption 1,
  there exists a function $q_1$ such that 
  $P(Y|\Xone)=q_1(f_1(\Xone),Y)$. 
  Further consider a \ourEmb\ $f_2$ of $\cXtwo$ w.r.t. $\cXone$.  
  Then, under Assumption 1, 
  there exists a function $q$ such that 
  $P(Y|\Xone,\Xtwo)=q(f_1(\Xone),f_2(\Xtwo),Y)$. 
  \label{thm:embed}
%}  
\end{theorem}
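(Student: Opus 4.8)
The plan is to exploit the conditional independence in Assumption~\ref{assump:indep} to reduce every conditional distribution of interest to a \emph{linear} function of the hidden-state posterior $(P(\hh|\Xone))_{\hh \in \cH}$, and then to show that a \ourEmb\ carries exactly enough information to recover that posterior. First I would decompose, using $\Xone \perp Y \mid \hh$ and $\Xone \perp \Xtwo \mid \hh$,
\begin{align}
P(Y|\Xone) = \sum_{\hh \in \cH} P(Y|\hh)\, P(\hh|\Xone), \qquad
P(\Xtwo|\Xone) = \sum_{\hh \in \cH} P(\Xtwo|\hh)\, P(\hh|\Xone).
\end{align}
Both are linear in the posterior vector $(P(\hh|\Xone))_{\hh}$.

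The key step is to argue that the second identity can be \emph{inverted}. I would factor $[P(\Xone,\Xtwo)] = A\,\mathrm{diag}(P(\hh))\,B^\top$ with $A_{x_1,\hh}=P(x_1|\hh)$ and $B_{x_2,\hh}=P(x_2|\hh)$; since this product has rank $|\cH|$ and $\mathrm{diag}(P(\hh))$ is invertible on the support, the matrix $B=[P(\Xtwo|\hh)]$ must have full column rank $|\cH|$. Hence the linear map sending the posterior $(P(\hh|\Xone))_{\hh}$ to the vector $(P(\Xtwo|\Xone))_{\Xtwo}$ is injective. Because $f_1$ is a \ourEmb, Definition~\ref{def:embed} gives $P(\Xtwo|\Xone)=g_1(f_1(\Xone),\Xtwo)$, so $f_1(\Xone)$ determines the whole vector $(P(\Xtwo|\Xone))_{\Xtwo}$; by injectivity this determines $(P(\hh|\Xone))_{\hh}$, which by the first display determines $P(Y|\Xone)$. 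This produces the desired $q_1$.

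For the second statement I would handle the joint posterior via Bayes' rule and conditional independence,
\begin{align}
P(\hh|\Xone,\Xtwo) \;\propto\; P(\Xone|\hh)\,P(\Xtwo|\hh)\,P(\hh) \;\propto\; \frac{P(\hh|\Xone)\,P(\hh|\Xtwo)}{P(\hh)},
\end{align}
where the proportionality constant depends on $(\Xone,\Xtwo)$ but not on $\hh$ and is therefore pinned down by $\sum_{\hh}P(\hh|\Xone,\Xtwo)=1$. Thus $P(\hh|\Xone,\Xtwo)$ is an explicit normalized function of the two single-view posteriors $(P(\hh|\Xone))_{\hh}$ and $(P(\hh|\Xtwo))_{\hh}$ and the fixed marginals $P(\hh)$. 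By the first part $f_1(\Xone)$ recovers the former, and by the symmetric argument applied to $f_2$ (the rank hypothesis is symmetric in the two views) $f_2(\Xtwo)$ recovers the latter, so $f_1(\Xone)$ and $f_2(\Xtwo)$ jointly determine $P(\hh|\Xone,\Xtwo)$ and hence $P(Y|\Xone,\Xtwo)=\sum_{\hh}P(Y|\hh)\,P(\hh|\Xone,\Xtwo)$, giving $q$.

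The main obstacle is the inversion step: everything hinges on converting the hypothesis $\mathrm{rank}\,[P(\Xone,\Xtwo)]=|\cH|$ into ``$[P(\Xtwo|\hh)]$ has full column rank,'' which is precisely what makes the hidden posterior recoverable from the predictive distribution of the opposite view. I would also need to dispose of degenerate cases (hidden states with $P(\hh)=0$ or inputs with zero marginal) by restricting to the support, and to confirm that each reconstruction is a genuine function of the embedding output rather than of $\Xone$ or $\Xtwo$ directly, which is immediate once injectivity is in hand.
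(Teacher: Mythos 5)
Your proposal is correct and follows essentially the same route as the paper's proof: decompose $P(Y|\Xone)$ and $P(\Xtwo|\Xone)$ through the hidden states, use the rank condition to recover the posterior $(P(\hh|\Xone))_{\hh}$ from the predictive distribution that $f_1$ determines, and combine the two single-view posteriors via Bayes' rule and conditional independence for the joint case. The only cosmetic difference is that you invoke injectivity of the map given by the full-column-rank matrix $[P(\Xtwo|\hh)]$ abstractly, whereas the paper writes down the explicit left inverse $(\bB^\top\bB)^{-1}\bB^\top$ to construct the recovery function.
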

%---
\begin{proof}
First, assume that $\cXone$ contains $d_1$ elements, and $\cXtwo$
contains $d_2$ elements, and $|\cH|=k$. 
The independence and rank condition in Assumption~\ref{assump:indep} implies the decomposition
  \[
  P(\Xtwo|\Xone) = \sum_{h \in \cH} P(\Xtwo|h) P(h|\Xone) 
  \]
  is of rank $k$ if we consider $P(\Xtwo|\Xone)$ as a $d_2 \times d_1$
  matrix (which we denote by $\bA$). Now we may also regard $P(\Xtwo|h)$ as a $d_2 \times k$
  matrix (which we denote by $\bB$), and $P(h|\Xone)$ as a $k \times
  d_1$ matrix (which we denote by $\bC$). From the matrix equation $\bA=\bB
  \bC$, we obtain $\bC= (\bB^\top \bB )^{-1} \bB^\top \bA$. Consider the $k \times
  d_2$ matrix $\bU=(\bB^\top \bB )^{-1} \bB^\top$.  
  Then we know that its elements correspond to
  a function 
%  $u(h,\Xtwo)$ with 
  of $(h,\Xtwo) \in \cH \times \cXtwo$. 
  Therefore the relationship $\bC=\bU\bA$ implies that there exists a function $u(h,\Xtwo)$ such that 
  \[
  \forall h \in \cH: P(h|\Xone)= \sum_{\Xtwo \in \cXtwo} P(\Xtwo|\Xone) u(h,\Xtwo) .
  \]
  Using the definition of embedding in Definition~\ref{def:embed}, we obtain
  \[
  P(h|\Xone)= \sum_{\Xtwo \in \cXtwo} g_1(f_1(\Xone),\Xtwo) u(h,\Xtwo) .
  \]
  Define $t_1(a_1,h)= \sum_{\Xtwo} g_1(a_1,\Xtwo) u(h,\Xtwo)$, then for any $h \in \cH$ we
  have % shown that
  \begin{equation}
    P(h|\Xone)= t_1(f_1(\Xone),h) . \label{eq:h-Xone}
  \end{equation}
  Similarly, there exists a function $t_2(a_2,h)$ such that for any $h \in \cH$ 
  \begin{equation}
    P(h|\Xtwo)= t_2(f_2(\Xtwo),h) . \label{eq:h-Xtwo}
  \end{equation}

%---  to prove that P(Y|X1) is a function of (f_1,Y)  
  Observe that 
  \begin{align*}
  P(Y|\Xone)&=\sum_{h \in \cH}P(Y,h|\Xone) = \sum_{h \in \cH}P(h|\Xone)P(Y|h,\Xone) \\
            &=\sum_{h \in \cH}P(h|\Xone)P(Y|h)=\sum_{h \in \cH}t_1(f_1(\Xone),h)P(Y|h)  
  \end{align*}
  where the third equation has used the assumption that $Y$ is independent of $\Xone$ given $h$ and 
  the last equation has used (\ref{eq:h-Xone}).  
  By defining $q_1(a_1,Y)=\sum_{h \in \cH} t_1(a_1,h)P(Y|h)$, we obtain $P(Y|\Xone)=q_1(f_1(\Xone),Y)$, as desired. 
  
%---  to prove that P(Y|X1,X2) is a function of (f_1,f_2,Y).  
  Further 
  observe that
  \begin{align}
  P(Y|\Xone,\Xtwo)=& \sum_{h \in \cH} P(Y,h|\Xone,\Xtwo) \nonumber \\
=& \sum_{h \in \cH} P(h|\Xone,\Xtwo) P(Y|h,\Xone,\Xtwo)  \nonumber \\
  =& \sum_{h \in \cH} P(h|\Xone,\Xtwo) P(Y|h) , \label{eq:Y-Xone-Xtwo}
\end{align}
where the last equation has used the assumption that $Y$ is independent of
$\Xone$ and $\Xtwo$ given $h$.

Note that
\begin{align*}
  P(h|\Xone,\Xtwo) =& \frac{P(h,\Xone,\Xtwo)}{P(\Xone,\Xtwo)} 
= \frac{P(h,\Xone,\Xtwo)}{\sum_{h' \in \cH} P(h',\Xone,\Xtwo)} 
  \\
  =& \frac{P(h) P(\Xone|h)P(\Xtwo|h)}{\sum_{h' \in \cH} P(h')
    P(\Xone|h')P(\Xtwo|h')} \\
  =& \frac{P(h,\Xone) P(h,\Xtwo) /P(h)}{\sum_{h' \in \cH} P(h',\Xone) P(h',\Xtwo) /P(h')} \\
  =& \frac{P(h|\Xone) P(h|\Xtwo) /P(h)}{\sum_{h' \in \cH} P(h'|\Xone) P(h'|\Xtwo) /P(h')} \\
  =& \frac{t_1(f_1(\Xone),h) t_2(f_2(\Xtwo),h) /P(h)}{\sum_{h' \in \cH}
    t_1(f_1(\Xone),h') t_2(f_2(\Xtwo),h') /P(h')} ,
\end{align*}
where the third equation has used the assumption that $\Xone$ is
independent of $\Xtwo$ given $h$, and the last equation has used
\eqref{eq:h-Xone} and \eqref{eq:h-Xtwo}.
The last equation means that $P(h|\Xone,\Xtwo)$ is a function of
$(f_1(\Xone), f_2(\Xtwo), h)$. That is, there exists a function $\tilde{t}$ such that
  $P(h|\Xone,\Xtwo)= \tilde{t}(f_1(\Xone),f_2(\Xtwo),h)$. 
  From \eqref{eq:Y-Xone-Xtwo}, this implies that 
  \[
  P(Y|\Xone,\Xtwo)= \sum_{h \in \cH} \tilde{t}(f_1(\Xone),f_2(\Xtwo),h) P(Y|h) .
  \]
  Now the theorem follows by defining $q(a_1,a_2,Y)=\sum_{h \in \cH} \tilde{t}(a_1,a_2,h) P(Y|h)$. 
\end{proof}

\section{Representation Power of Region Embedding}

%In this section, 
We provide some formal definitions and theoretical arguments to
support the effectiveness of the type of region embedding experimented with in the main text. 

A text region embedding is a function that maps a region of text (a
sequence of two or more words) into a numerical vector.  
%so that similar text become similar in the vector space. %cmt: that's a goal, not a definition. 
The particular form of region embedding we consider takes 
either sequential or bow representation of the text region as input.  
More precisely, 
consider a language with vocabulary $\voc$.  %, with vocabulary size $|\voc|$. 
Each word $\wrd$ in the language
is taken from $\voc$, and can be represented as a $|\voc|$
dimensional vector referred to as one-hot-vector representation.
Each of the $|\voc|$  vector components represents a
vocabulary entry. The vector representation of $\wrd$ has value one for the
component corresponding to the word, and value zeros elsewhere.
%cmt: i don't think we use this ... 
%In this section, we do not differentiate a $\wrd$ and its $|\voc|$
%dimensional one-hot vector representation.
%
A text region % $x$ 
of size $m$ is a sequence
of $m$ words $(\wrd_1, \wrd_2,\ldots,\wrd_m)$, where each word $\wrd_i
\in \voc$.  It can be represented as a $m |\voc|$ dimensional
vector, which is a concatenation of vector representations of the
words, as in (2) in Section 1.1 of the main text. 
Here we call this representation {\em seq-representation}.
An alternative is the {\em bow-representation} as in (3) of the main text.  

Let $\reg_m$ be the
set of all possible text regions of size $m$ in the seq-representation (or alternatively, bow-representation). 
We consider 
embeddings of a text region $\bx \in \reg_m$ in the form of 
\[
(\bW\bx+\bb)_+=\max(0, \bW\bx+\bb)~. 
\]
The embedding matrix $\bW$ and bias vector $\bb$ 
are learned by training, and the training objective depends
on the task.  
In the following, this  particular form of region embedding is referred to as {\em \RegEmb} 
(Region Embedding of TEXt), and 
the vectors produced by \RegEmb\ or the results of \RegEmb\ are referred to as {\em \RegEmb\ vectors}. 

The goal of region embedding learning is to map high-level concepts (relevant to the task of interest) 
to low-dimensional vectors.  As said in the main text, this cannot be done by word embedding learning 
since a word embedding embeds individual words in isolation (i.e., word-$i$ is mapped to vector-$i$ irrespective of its context), 
which are too primitive to correspond to high-level concepts.
For example, ``easy to use'' conveys positive sentiment, but ``use'' in isolation does not.   
Through the analysis of the representation power of \RegEmb, 
we show that 
unlike word embeddings, \RegEmb\ can model high-level concepts by using co-presence and absence of 
words in the region, which is similar to the traditional use of $m$-grams but more efficient/robust.  

First we show that for any (possibly nonlinear) real-valued function $f(\cdot)$ defined on $\reg_m$,  % of text regions of size $m$, 
there exists a \RegEmb\ so that this function can be expressed 
in terms of a linear function of \RegEmb\ vectors. 
%Such a property
This property 
is often referred to as {\em universal approximation} in the
literature (e.g., see \url{https://en.wikipedia.org/wiki/Universal_approximation_theorem}).

\begin{proposition}
  Consider a real-valued function $f(\cdot)$ defined on $\reg_m$. There exists an embedding matrix $\bW$,
  bias vector $\bb$, and vector $\bv$ such that
  $f(\bx)= \bv^\top (\bW\bx+\bb)_+$ for all $\bx \in \reg_m$.
\label{prop:universal}
\end{proposition}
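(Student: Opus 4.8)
The plan is to exploit the fact that the domain $\reg_m$ is a \emph{finite} set of distinct vectors: in the seq-representation it has at most $\vsz^m$ elements, and the bow-representation yields no more, so I may enumerate its elements as $\bz_1,\dots,\bz_N$. Since $f$ is arbitrary on these $N$ points, it suffices to build for each $i$ a function $\phi_i$ that is itself realizable in the form $\bv_i^\top(\bW_i\bx+\bb_i)_+$ and satisfies $\phi_i(\bz_j)=1$ if $j=i$ and $\phi_i(\bz_j)=0$ otherwise; then $f=\sum_i f(\bz_i)\,\phi_i$ has the required shape once the rows of the $\bW_i$ are stacked into a single $\bW$, the biases into $\bb$, and the coefficients into $\bv$. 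In the seq-representation such a $\phi_i$ is immediate: each $\bz$ is a concatenation of $m$ one-hot vectors, so $\bz\cdot\bz=m$ while $\bz_i\cdot\bz_j\le m-1$ for $i\neq j$ (distinct regions differ in at least one word slot); hence $(\bz_i\cdot\bx-(m-1))_+$ is a single-neuron indicator of $\bz_i$ on $\reg_m$.

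The bow-representation requires more care, because its points need not be linearly separable one at a time (for example the uniform count vector lies in the convex hull of the pure ones), so a single neuron per region no longer works. I would instead first collapse everything to one dimension. Choose a vector $\bu$ for which the values $\bu\cdot\bz_i$ are pairwise distinct; such $\bu$ exists because the forbidden set $\bigcup_{i\ne j}\{\bu:\bu\cdot(\bz_i-\bz_j)=0\}$ is a finite union of hyperplanes and hence has measure zero. Relabel so that $c_1<c_2<\cdots<c_N$ where $c_i:=\bu\cdot\bz_i$, and set $\delta$ equal to half the smallest gap $\min_i(c_{i+1}-c_i)$.

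On the real line a triangular hat supported on the open interval of radius $\delta$ about $c_i$ is a pure ReLU combination, namely $\psi_i(t)=(t-c_i+\delta)_+-2(t-c_i)_++(t-c_i-\delta)_+$, which satisfies $\psi_i(c_i)=\delta$ and, by the choice of $\delta$, $\psi_i(c_j)=0$ for every $j\neq i$. Substituting $t=\bu\cdot\bx$ turns each $(t-\theta)_+$ into a ReLU of an affine function of $\bx$, so $\phi_i:=\psi_i(\bu\cdot\bx)/\delta$ is realizable in the required form and is the desired indicator. Assembling $\sum_i f(\bz_i)\,\phi_i$ over $i$ then produces an explicit $\bv$, $\bW$, and $\bb$ with $f(\bx)=\bv^\top(\bW\bx+\bb)_+$ on all of $\reg_m$.

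The only genuine obstacle is the non-separability noted for the bow case, and the one-dimensional projection is exactly what sidesteps it; the remaining ingredients, the existence of $\bu$ and the gap bookkeeping that makes the hats disjoint on the data, are routine. I would therefore present the hat-function construction as the main argument, since it covers both representations simultaneously, and merely remark that for the seq-representation the simpler inner-product indicator already suffices.
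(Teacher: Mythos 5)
Your proof is correct, but for the bow case it takes a genuinely different route from the paper's. The paper uses a \emph{single} neuron per region for both representations: for a region $\bxi$ with support $S_i$ it takes weights $\bW_{i,j}=2I(j\in S_i)-1$ (i.e.\ $+1$ on words present, $-1$ on words absent) and bias $\bbi=1-|S_i|$, so the pre-activation equals $1-|S_i\,\triangle\,S_x|$ (symmetric difference of supports), which is $1$ exactly at $\bxi$ and nonpositive at every other element of $\reg_m$. This works uniformly for seq and bow because the paper treats bow vectors as $0/1$ presence vectors (``no more than $m$ ones''), so every region is a hypercube vertex. Your seq-case indicator $(\bz_i\cdot\bx-(m-1))_+$ is the same single-neuron idea, and you are right that with nonnegative weights it breaks for bow --- nested supports (e.g.\ the region ``it it'' versus ``love it'') give false positives --- but the paper's fix is simply the $\pm 1$ weights, which penalize absent words; your convex-hull obstruction, by contrast, is a real obstruction only for \emph{count-valued} bow vectors, which the paper's proof excludes by assumption. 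Your hat-function construction (generic one-dimensional projection $\bu$ with pairwise-distinct values, plus triangular ReLU bumps of half-width $\delta$) is the standard finite-set interpolation argument; what it buys is generality --- it works for any finite set of distinct vectors in $\Real^d$, counts included, with no appeal to $0/1$ structure --- at the cost of three neurons per region instead of one and a non-constructive (measure-zero) choice of $\bu$. The paper's construction is more economical and explicit, and it incidentally reinforces the paper's narrative that the embedding exploits both co-presence \emph{and absence} of words. Both arguments validly establish the proposition.
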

\begin{proof}
  Denote by $\bW_{i,j}$ the entry of $\bW$ corresponding to the $i$-th row and $j$-th column. 
  Assume each element in $\reg_m$ can be represented as a $d$ dimensional vector with no more than $m$ ones (and the remaining entries are zeros).
%cmt: changing tilde{x} to \bxi as it seems more natural, ... is it okay? 
  Given a specific $\bxi \in \reg_m$, let $S_i$ be a set of indexes $j \in \{1,\ldots,d\}$ such that 
  the $j$-th component of $\bxi$ is one.  
%cmt: i couldn't parse this ... 
%  the set of component ones corresponds to a set $S \subset \{1,\ldots,d\}$.
  We create a row $\bW_{i,\cdot}$ in $\bW$ such that $\bW_{i,j}= 2I( j \in S_i)-1$ for $1 \leq j \leq d$, 
  where $I(\cdot)$ is the set indicator function.  Let $\bbi=-|S_i|+1$ where $\bbi$ denotes the $i$-th component of $\bb$.  
  It follows that $\bW_{i,\cdot}\bx+\bbi=1$ if $\bx=\bxi$, and $\bW_{i,\cdot}\bx+\bbi\leq 0$ otherwise. 
  In this manner 
  we create one row of $\bW$ per every member of $\reg_m$.  
%cmt: no, the number of W's rows is |\reg_m|, but we don't have to say ... 
% , and the number of rows is no more than $d^m$. 
  Let $\bvi=f(\bxi)$.
%cmt: cutting short as i thought the rest is obvious. 
  Then it follows that $f(\bx)=\bv^\top (\bW\bx+\bb)_+$. 
%cmt: i couldn't parse the first line of this ... 
%  It follows that for each $x \in \reg_m$, let $i$ be the row $i$ corresponding to $x$, we have
%  $f(x)= v_i = v_i (W_{i,\cdot}x+b_i)_+ = \sum_j v_j (W_{j,\cdot} x+b_j)_+ = v^\top (Wx+b)_+$. 
\end{proof}

The proof essentially constructs
the indicator functions of all the $m$-grams (text regions of size $m$) in $\reg_m$ and maps them to the corresponding function values. 
Thus, the representation power of \RegEmb\ is at least as good as $m$-grams, 
and more powerful than the sum of word embeddings in spite of the seeming similarity in form.  
%This implies that similar to $m$-grams, the embedding can
%capture context sensitive information, which is more powerful than a simple sum of word embeddings.
However, it is well known that the traditional $m$-gram-based approaches, 
which assign one vector dimension per $m$-gram, 
can suffer from the data sparsity problem because an $m$-gram is useful only if it is seen in the training data.

%In this respect, \RegEmb\ can have significant advantages. 
This is where \RegEmb\ can have clear advantages. 
We show below that 
it can map similar $m$-grams (similar w.r.t. the training objective) to similar lower-dimensional vectors, 
which helps learning the task of interest.  
It is also more expressive than the traditional $m$-gram-based approaches because it can map not only co-presence 
but also absence of words 
(which $m$-gram cannot express concisely) into a single dimension.  
These properties lead to robustness to data sparsity. 
 
We first introduce a definition of a {\em simple concept}.  
\begin{definition}
  Consider $\reg_m$ of the seq-representation.  
  A high level semantic concept $C \subset \reg_m$ is called simple if it can be defined as follows.
  Let $V_1,\ldots, V_m \subset \voc$ be $m$ word groups (each word group may either represent a set of 
  similar words or the absent of certain words), and $s_1,\ldots,s_m \subset \{\pm 1\}$ be signs.
  Define $C$ such that $\bx \in C$ if and only if the $i$-th word in $\bx$
  either belongs to $V_i$ (if $s_i=1$) or $\neg V_i$ (if $s_i=-1$).  
\end{definition}
The next proposition illustrates the points above by stating that \RegEmb\ has the ability to represent a simple concept
(defined above via the notion of similar words) % , which could consists of many $m$-grams, 
%containing similar words  %cmt: couldn't understand ... 
by a single dimension. 
%even though it could consist of many $m$-grams.  
This is in contrast to the construction in the proof of Proposition~\ref{prop:universal}, where one dimension could 
represent only one $m$-gram. 
\begin{proposition}
%Then 
The indicator function of any simple concept $C$ can be  embedded into one dimension using \RegEmb.
\label{prop:concept}
\end{proposition}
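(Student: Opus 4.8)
The plan is to exhibit a single rectified-linear neuron---one row of the embedding matrix $\bW$ together with one bias entry---whose post-activation value equals the indicator $I(\bx \in C)$ for every $\bx \in \reg_m$, so that the concept occupies exactly one coordinate of the \RegEmb\ vector. The idea refines the thresholding construction in the proof of Proposition~\ref{prop:universal}: instead of letting one row fire on a single $m$-gram, I would let one row count how many positions of $\bx$ satisfy their per-position condition and then threshold so that only a full match survives. First I would fix the seq-representation indexing, writing the $m\vsz$ coordinates of $\bx$ as pairs $(i,\wrd)$ with $i \in \{1,\ldots,m\}$ and $\wrd \in \voc$, where the $i$-th block holds the one-hot vector of the $i$-th word. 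The structural fact I rely on throughout is that the seq-representation has exactly one $1$ per block, i.e. $\sum_{\wrd \in \voc} \bx_{(i,\wrd)} = 1$ for each position $i$.

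Next I would translate the simple concept into per-position acceptance sets. For each $i$ I set $A_i = V_i$ when $s_i = 1$ and $A_i = \voc \setminus V_i$ when $s_i = -1$; by the definition of a simple concept this means $\bx \in C$ if and only if the $i$-th word of $\bx$ lies in $A_i$ for all $i = 1,\ldots,m$. I then define the single weight row $\bw$ by $\bw_{(i,\wrd)} = I(\wrd \in A_i)$---placing a $1$ on precisely those coordinates whose word is accepted at its position---and set the bias to $b = -(m-1)$.

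The core computation is to read off the pre-activation. Because each block contributes exactly the indicator of whether its (unique) word is accepted, $\bw^\top \bx$ equals the number of positions at which $\bx$ is accepted, an integer between $0$ and $m$; adding $b$ gives $\bw^\top \bx + b$, which equals $1$ precisely when all $m$ positions are accepted (that is, when $\bx \in C$) and is $\le 0$ otherwise. Applying the rectifier therefore yields $(\bw^\top \bx + b)_+ = I(\bx \in C)$ exactly, so this one coordinate of the \RegEmb\ vector already realizes the indicator, and a readout weight of $1$ on it (zero elsewhere in $\bv$) recovers $I(\bx\in C) = \bv^\top(\bW\bx+\bb)_+$ if an explicit linear form is wanted.

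I expect the construction to be routine; the one point needing care is the thresholding, where the choice $b = -(m-1)$ must cleanly separate a full match (count $=m$) from every partial match (count $\le m-1$). This separation hinges entirely on the one-hot-per-block property of the seq-representation---without it a block could contribute more than $1$ and the count argument would collapse---so the hard part will be simply to invoke that property explicitly. I would also note that both readings of a word group (``a set of similar words'' and ``the absence of certain words'') are absorbed uniformly by the sign-dependent definition of $A_i$, so no case analysis beyond the single sign check is required.
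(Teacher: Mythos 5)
Your proof is correct and takes essentially the same approach as the paper's: a single rectified-linear unit whose pre-activation counts per-position matches, with a bias chosen so that only a full match survives the threshold. The only difference is a reparametrization --- the paper assigns weight $s_i$ to the words of $V_i$ in block $i$ (so forbidden words are penalized by $-1$) with bias $1-\sum_{i=1}^m (s_i+1)/2$, whereas you assign weight $+1$ to the acceptance set ($V_i$ or its complement) with bias $-(m-1)$; since each block of the seq-representation sums to one, the two constructions differ only by a constant shift absorbed in the bias.
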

\begin{proof}
  Consider a text region vector $\bx \in \reg_m$ in seq-representation that contains $m$ 
  of %cmt: added "of" to avoid looking like multiplication ... 
  $|\voc|$-dimensional segments, where the $i$-th segment represents the $i$-th position in the text region.  
%  (representing each word position in the text region). 
  Let the $i$-th segment of $\bw$ 
  be a vector of zeros except for those components in $V_i$ being $s_i$.  Let $b=1-\sum_{i=1}^m (s_i+1)/2$. 
  Then it is not difficult to check that $I(\bx \in C) = (\bw^\top \bx + b)_+$.   
%  $I(\bx \in C) = (\bw^\top \bx + 1-\sum_{i=1}^m (s_i+1)/2)_+$. 
\end{proof}

The following proposition shows that \RegEmb\ can embed concepts that are unions of simple concepts into low-dimensional vectors. 
\begin{proposition}
  If $C \subset \reg_m$ is the union of $q$ simple concepts $C_1,\ldots,C_q$, then there exists a function $f(\bx)$ that is the linear function of $q$-dimensional
  \RegEmb\ vectors so that $\bx \in C$ if and only if $f(\bx) >0$.
  \label{prop:cunion}
\end{proposition}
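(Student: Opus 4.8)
The plan is to reduce the union to the single-concept case already handled by Proposition~\ref{prop:concept}, and then to combine the resulting $q$ one-dimensional embeddings into a single $q$-dimensional \RegEmb. First I would apply Proposition~\ref{prop:concept} to each simple concept $C_j$ separately, obtaining for every $j \in \{1,\dots,q\}$ a weight vector $\bw_j$ and a bias $b_j$ such that the membership indicator of $C_j$ is realized exactly as one \RegEmb\ coordinate, i.e. $I(\bx \in C_j) = (\bw_j^\top \bx + b_j)_+$ for all $\bx \in \reg_m$. The key structural fact I would lean on here is that this expression is $\{0,1\}$-valued: the rectifier clips it to exactly $1$ when $\bx \in C_j$ and to exactly $0$ otherwise.

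Next I would stack these into one embedding. Let $\bW$ be the matrix whose $j$-th row is $\bw_j^\top$, and let $\bb$ be the vector whose $j$-th entry is $b_j$. Then the $q$-dimensional \RegEmb\ vector $(\bW\bx+\bb)_+$ has $j$-th coordinate equal to $I(\bx \in C_j)$. Finally I would take $\bv = (1,\dots,1)^\top$ and define $f(\bx) = \bv^\top (\bW\bx + \bb)_+ = \sum_{j=1}^{q} I(\bx \in C_j)$, which simply counts how many of the $q$ simple concepts $\bx$ belongs to. This $f$ is by construction a linear function of the $q$-dimensional \RegEmb\ vector, as required.

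The equivalence then follows immediately. Since each summand is nonnegative, $f(\bx) > 0$ holds if and only if at least one term is positive, which happens if and only if $\bx \in C_j$ for some $j$, i.e. if and only if $\bx \in \bigcup_{j=1}^{q} C_j = C$. This gives the claimed characterization $\bx \in C \iff f(\bx) > 0$.

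As for obstacles, this argument is almost entirely bookkeeping; the only point that genuinely requires care — and the reason I would insist on the all-ones $\bv$ with strictly positive entries — is to rule out any cancellation among overlapping concepts. Because every coordinate of $(\bW\bx+\bb)_+$ is an exact $\{0,1\}$ indicator rather than merely a real number of the correct sign, summing the coordinates with positive weights can never mask membership, so the threshold test $f(\bx) > 0$ is exact even when the $C_j$ overlap. I would therefore expect the main (very mild) subtlety to be confirming that the construction in Proposition~\ref{prop:concept} indeed produces exact indicators, which is precisely what its proof establishes.
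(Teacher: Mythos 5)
Your proof is correct and follows essentially the same route as the paper: stack the per-concept rows from Proposition~\ref{prop:concept} into a $q$-row matrix $\bW$ with bias $\bb$, take $\bv=[1,\ldots,1]^\top$, and observe that $f(\bx)=\bv^\top(\bW\bx+\bb)_+$ counts concept memberships, so $f(\bx)>0$ exactly on the union. The paper states this more tersely, but your added remark that the coordinates are exact $\{0,1\}$ indicators (hence no cancellation under overlap) is precisely the unstated fact the paper's construction relies on.
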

\begin{proof}
  Let $\bb \in \Real^q$, and let $\bW$ have $q$
  rows, so that $I(\bx\in C_i)= (\bW_{i,\cdot} \bx + \bbi)_+$
for each row $i$, as constructed in the proof of Proposition \ref{prop:concept}.  
  Let $\bv=[1,\ldots,1]^\top \in \Real^q$.  
Then $f(\bx)=\bv^\top(\bW\bx+\bb)_+$ is a function of the desired property. 
\end{proof}

%cmt: "take advantage of occurrence of words" didn't sound good to me, so this is my rendition ... 
Note that $q$ can be much smaller than the number of $m$-grams in concept $C$.  
Proposition \ref{prop:cunion} shows that \RegEmb\ has the ability 
to simultaneously 
%take advantage 
make use
of word similarity (via word groups) and 
the fact that words occur in the context, % (as opposed to being in isolation)
to reduce the embedding dimension.
A word embedding can model word similarity but does not model context.
$m$-gram-based approaches can model context but cannot model word similarity --- which means a concept/context
has to be expressed with a large number of individual $m$-grams, leading to the data sparsity problem. 
Thus, the representation power of \RegEmb\ exceeds that of single-word embedding and traditional $m$-gram-based approaches.  
%cmt: i don't want to mention experiments here b/c this paper doesn't compare with n-grams and reality is complicated ... 

\end{appendices}

%-----
%\subsubsection*{References}
\bibliographystyle{plain}
\begin{small}
\bibliography{cnn-supsemi}
\end{small}

%\subsubsection*{Acknowledgments}

\end{document}